\begin{document}
\title{Multi-Target Decision Making under \\Conditions of Severe Uncertainty}
%
%
\author{Christoph Jansen\orcidID{0000-0002-5648-4687} \and
Georg Schollmeyer\orcidID{0000-0002-6199-1886}\and
Thomas Augustin\orcidID{0000-0002-1854-6226} 
}
\authorrunning{C. Jansen et al.}
%
\institute{
Department of Statistics, Ludwig-Maximilians-Universität, München, Germany\\
\email{\{firstname.lastname\}@stat.uni-muenchen.de}}
\maketitle              
\begin{abstract}
The quality of consequences in a decision making problem under (severe) uncertainty must often be compared among different targets (goals, objectives) simultaneously. In addition, the evaluations of a consequence's performance under the various targets often differ in their scale of measurement, classically being either purely ordinal or perfectly cardinal. In this paper, we transfer recent developments from abstract decision theory with incomplete preferential and probabilistic information to this multi-target setting and show how -- by exploiting the (potentially) partial cardinal and partial probabilistic information -- more informative orders for comparing decisions can be given than the Pareto order. We discuss some interesting properties of the proposed orders between decision options and show how they can be concretely computed by linear optimization. We conclude the paper by demonstrating our framework in an artificial (but quite real-world) example in the context of comparing algorithms under different performance measures.

\keywords{Incomplete preferences  \and Multi-target decision making \and Preference systems \and Imprecise probabilities \and Stochastic dominance.}
\end{abstract}
\section{Introduction}
The basic model of decision making under uncertainty is as simple as it is expressive: an \textit{agent} is asked to choose between different available \textit{actions} $X$ from a known set of actions $\mathcal{G}$. The challenge is that the \textit{consequence} of choosing an action $X$ is not deterministic, but rather depends on which \textit{state of nature} from a known set of such states turns out to be the true one. Formally, each action is a mapping $X:S \rightarrow A$, where $A$ is the set of all possible consequences. The \textit{decision problem} $\mathcal{G}$ is then simply a subset of the set of all possible actions, i.e., the set $A^S=\{X:S \to A\}$.\footnote{For an original sources, see~\cite{s1954}.} The agent's goal here is to select an optimal action. This goal is formalized by specifying a \textit{choice function} $ch:2^{\mathcal{G}} \to 2^{\mathcal{G}}$ satisfying $ch(\mathcal{D}) \subseteq \mathcal{D} $ for all $\mathcal{D} \in 2^{\mathcal{G}}$. The sets $ch(\mathcal{D})$ are called \textit{choice sets} and have a slightly different interpretation depending on the quality of the information used to construct the choice function: The \textit{strong view} interprets $ch(\mathcal{D})$ as the set of \textit{optimal} actions from $\mathcal{D}$. The \textit{weak view}, on the other hand, interprets $ch(\mathcal{D})$ as the set of actions from $\mathcal{D}$ that \textit{cannot be rejected} based on the information.\footnote{For more details on the choice function approach to decision making see, e.g.,~\cite{b2015}.}

To construct choice functions, one mainly uses information from two different sources: The first source $I_1$ is the information about the \textit{process that generates the states of nature}. The second source $I_2$ is the information about the \textit{agent's preferences} over the consequence set $A$. In decision theory, it is classically assumed that $I_1$ has sufficient structure to be expressed in terms of a \textit{single probability measure} over the states from $S$ (see, e.g.,~\cite{f1974}), whereas $I_2$ is assumed to provide enough information to be characterized by a \textit{cardinally interpretable utility function} (see, e.g.,~\cite{krantz1971foundations}). Under these two structural assumptions, a suitable choice function is quickly found: One selects from each set $\mathcal{D}$ those actions which\textit{ maximize} the -- then well-defined -- \textit{expected utility}. Obviously, the choice sets of the choice function based on expected utility comparison can then also be given the strong interpretation.

However, in many realistic applications it turns out that the classical assumptions are systematically too restrictive and should be replaced by relaxed uncertainty assumptions and preference assumptions in order to meet the requirement of a useful theory for practice. A prominent and much discussed such application is \textit{multi-target decision problems}:\footnote{See, e.g.,~\cite{kr1993} for a classic source and~\cite{ts2017}~for recent work. It seems important to us to emphasize the difference to the (related) theory of multicriteria decision making (see, e.g.,~\cite{alv2013}~for a survey): While -- roughly speaking -- in multi-criteria decision making the same utility function is evaluated with respect to different criteria, in the multi-target setting different utility functions are evaluated under the same criterion.} By considering multiple targets simultaneously, the consequence set becomes multidimensional and generally only partially ordered, and (in general) there is no hope for a partial ordering to be be adequately described by a unique cardinal utility function. 

In this paper, we aim to contribute a new perspective to the lively discussion on multi-target decision making. To this end, we transfer recent developments from decision theory under weakly structured information -- based on both complexly structured preferences and imprecise probabilistic models -- to the multi-target situation and show how they can be used in a flexible and information-efficient way to generalize classical concepts of multi-target decision making. This transfer allows us to preserve the appeal of the classical approach while simultaneously utilizing \textit{all} available information in a perfect manner in order to pursue more informative decision theory.

Our paper is organized as follows: In Section~\ref{wsinf}, we first recall the modeling approaches for weakly structured information (Secions~\ref{wsprob} and~\ref{wspref}), then define two types of choice functions in this framework (Section~\ref{wscrit}) and, finally, give an algorithm for computing the associated choice sets (Section~\ref{computation}). In Section~\ref{adaptation}, we introduce our version of multi-target decision problems (Section~\ref{mtdp})and transfer the concepts from before to this setting (Section~\ref{transfer}). 
In Section~\ref{application} we illustrate our framework in a (synthetic) application example. Section~\ref{crem} concludes.
\section{Decision Making under Weakly Structured Information} \label{wsinf}
The decisive advantage of a generalized decision theory, which also includes a relaxation of the assumptions on $I_1$ and $I_2$, is that it is also applicable in situations in which classical decision theory fails. It is \textit{information efficient}, as it manages to include every piece of information, no matter how weakly structured. In the following two subsections we briefly present the most important basic concepts for the formal description of the relaxation of the two sources of information.
\subsection{Weakly Structured Probabilistic Information} \label{wsprob}
We now turn to the relaxation of $I_1$, i.e., the information about the process that generates the states of nature. It is classically assumed to be describable by a single probability measure. Often, however, imperfect probabilistic information will be present, rather than perfect, e.g., in the form of constraints on the probabilities of certain events or, more generally, on the expectations of certain random variables. To describe this kind of generalized uncertainty, the theory of \textit{imprecise probabilities} as developed in \cite{l1974,kofler,Walley.1991,w2001} is perfectly suitable. It should be noted here that the term imprecise probabilities is actually an umbrella term for many different generalized uncertainty theories. We restrict ourselves here to a specific one among them, namely convex finitely generated \textit{credal sets}.
\begin{definition}\label{credset}
A \textit{finitely-generated credal set} on $(S ,\sigma(S))$ is a set of the form
\begin{equation*}
\mathcal{M}=\Bigr\{ \pi \in \mathcal{P}: \underline{b}_\ell \leq \mathbb{E}_\pi(f_\ell) \leq \overline{b}_\ell \text{ for } \ell=1, \dots, r\Bigl\}
\end{equation*}
with $\mathcal{P}$ the set of all probabilities on $(S ,\sigma(S))$, $f_1, \dots , f_r: S \to \mathbb{R}$  bounded and measurable, and $\underline{b}_\ell \leq \overline{b}_\ell$ their lower and upper expectation bounds.
\end{definition}
%

It is useful that such credal sets -- at least for finite $S$ -- have only finitely many extreme points. If so, we denote  the set of these by
$\mathcal{E}(\mathcal{M})=\{\pi^{(1)}, \dots ,\pi^{(K)}\}$.    
%
\subsection{Weakly Structured Preferences} \label{wspref}
The information source $I_2$ is classically assumed to be structured enough to be described by a cardinally interpretable utility function. Relaxing this assumption, it makes sense -- comparable to the situation of relaxing the uncertainty model -- to work with the set of all utility functions which are consistent with certain preference restrictions. In order to be able to formalize even very complexly structured restrictions, we use so-called preference systems.\footnote{The following Definitions~\ref{ps},~\ref{consistency}, and~\ref{granularity} are (essentially) taken from~\cite{jsa2018} and also discussed in~\cite{jbas2021}. General representation results for the formally related concept of a \textit{difference preorder} can be found in~\cite{p2013}.}
\begin{definition}\label{ps}
Let $A$ denote a set of consequences. Let further $R_1 \subseteq A \times A$ be a pre-order\footnote{That is, reflexive and transitive.} on $A$, and
 $R_2 \subseteq R_1 \times R_1$ be a pre-order on $R_1$. The triplet $\mathcal{A}=[A, R_1 , R_2]$ is called a \textbf{preference system} on $A$. The preference system $\mathcal{A}'=[A', R_1' , R_2']$ is called \textbf{subsystem} of $\mathcal{A}$ if $A' \subseteq A$, $R_1'\subseteq R_1$, and $R_2'\subseteq R_2$.
\end{definition}
We introduce the following rationality criterion for preference systems. Here, for a pre-order $R \subseteq M \times M$, we denote by $P_R \subseteq M \times M$ its \textit{strict part}
and by $I_R \subseteq M \times M$ its \textit{indifference part}.
%
\begin{definition} \label{consistency}
The preference system $\mathcal{A}=[A, R_1 , R_2]$ is \textbf{consistent} if there exists a function $u:A \to [0,1]$ such that for all $a,b,c,d \in A$ it holds:
\begin{itemize}\itemsep1.5mm
\item[i)] If  $ (a , b) \in R_1$, then $u(a) \geq u(b)$, where equality holds iff $(a,b)\in I_{R_1}$.
\item[ii)] If $((a , b),(c,d)) \in R_2$, then $u(a) -u(b) \geq u(c)-u(d)$, where equality holds iff $((a,b),(c,d))\in I_{R_2}$.
\end{itemize}
The set of all such \textbf{representations} $u$ satisfying i) and ii) is denoted by $\mathcal{U}_{\mathcal{A}}$. 
\label{consistent}
\end{definition}
For our later decision rule, it is necessary to consider the set of all normalized representations that can be intuitively regularized using a single parameter.
\begin{definition} \label{granularity}
Let $\mathcal{A}=[A, R_1 , R_2]$ be a consistent preference system containing $a_*, a^* \in A$ such that $(a^*,a) \in R_1$ and $(a,a_*) \in R_1$ for all $a \in A$. Then 
$$\mathcal{N}_{\mathcal{A}}:= \Bigl\{u  \in \mathcal{U_{\mathcal{A}}}: u(a_*)=0 ~\wedge ~ u(a^*)=1 \Bigl\}$$
is called the \textbf{normalized representation set} of $\mathcal{A}$. Further, for a number $\delta \in [0,1)$, $\mathcal{N}^{\delta}_{\mathcal{A}}$ denotes the set of all $u \in \mathcal{N}_{\mathcal{A}}$ satisfying 
 $$ u(a)-u(b) \geq \delta ~~~\wedge ~~~ u(c)-u(d) -u(e) + u(f) \geq \delta $$ 
 for all $(a,b) \in P_{R_1}$ and all $((c,d),(e,f)) \in P_{R_2} $. Call $\mathcal{A}$ \textbf{$\delta$-consistent} if $\mathcal{N}^{\delta}_{\mathcal{A}} \neq \emptyset$.
\end{definition}
\subsection{A Criterion for Decision Making} \label{wscrit}
Naturally, a generalization of the structural assumptions to the information sources $I_1$ and $I_2$ also requires a generalization of the decision theory based on these information sources and the associated choice functions. Much work has been done in the literature on the case where the information source $I_1$ was replaced by an imprecise probabilistic model,\footnote{See, e.g.,~\cite{troffaes_ijar} for a survey or \cite{l1974,Walley.1991,festschrift} for original sources. Note that there is also quite an amount of literature on computation for that case, see, e.g.,~\cite{autkin,kikuti,compitip,jsa2017}.} while the information source $I_2$ was typically left untouched. A recent work on choice functions under generalization of both information sources simultaneously is given by~\cite{jsa2018}. We focus here on only one choice function, which is a generalization of the one induced by the relation $R_{\forall\forall}$ discussed in~\cite[p. 123]{jsa2018}. 
\begin{definition}\label{GSD}
Let $\mathcal{A}=[A, R_1 , R_2]$ be a $\delta$-consistent preference system, let $\mathcal{M}$ be a credal set on $(S,\sigma(S))$, and let
$$\mathcal{F}_{(\mathcal{A},S)}:=\Bigr\{X \in A^S: u \circ X \text{ is }\sigma(S)\text{-}\mathcal{B}_{\mathbb{R}}([0,1])\text{-measurable for all } u \in \mathcal{U}_{\mathcal{A}}\Bigl\}.$$
For $X,Y \in \mathcal{F}_{(\mathcal{A},S)}$, the variable $Y$ is called \textbf{$(\mathcal{A},\mathcal{M} , \delta)$-dominated} by $X$ if 
$$\mathbb{E}_{\pi}(u \circ X) \geq \mathbb{E}_{\pi}(u \circ Y)$$
 for all $u \in \mathcal{N}^{\delta}_{\mathcal{A}}$ and $\pi \in \mathcal{M}$. Denote the induced relation by $\geq_{(\mathcal{A},\mathcal{M} , \delta)}$.
\end{definition}
The relation $\geq_{(\mathcal{A},\mathcal{M} , \delta)}$ induces two choice functions in a perfectly natural way. The first allows the strong view and selects those actions that dominate all other actions for any compatible combination of utility and probability in expectation. The second allows only the weak view and selects those actions that are not strictly dominated by any other action.
\begin{definition}  \label{cf}
Consider the situation of Definition~\ref{GSD} and let $\mathcal{D} \subseteq\mathcal{G} \subseteq \mathcal{F}_{(\mathcal{A},S)}$. Define the following two sets associated with the relation $\geq_{(\mathcal{A},\mathcal{M} , \delta)}$:
\begin{itemize}
    \item[i)] The set of $\geq_{(\mathcal{A},\mathcal{M} , \delta)}$-\textbf{maximal acts} from $\mathcal{D} \subseteq\mathcal{G}$ is given by  
    $$\textsf{max}(\mathcal{D},\mathcal{A},\mathcal{M} , \delta):=\Bigl\{X \in \mathcal{D}:(X,Y) \in \geq_{(\mathcal{A},\mathcal{M} , \delta)} \text{ for all } Y \in \mathcal{D}  \Bigr\}.$$
    \item[ii)] The set of $\geq_{(\mathcal{A},\mathcal{M} , \delta)}$-\textbf{undominated acts} from $\mathcal{D} \subseteq\mathcal{G}$ is given by
    $$\textsf{und}(\mathcal{D},\mathcal{A},\mathcal{M} , \delta):=\Bigl\{X \in \mathcal{D}: \nexists Y\in \mathcal{D} \text{ such that }(Y,X) \in P_{\geq_{(\mathcal{A},\mathcal{M} , \delta)}}  \Bigr\}.$$
\end{itemize}
\end{definition}
The next proposition establishes a relationship between the choice sets of our two choice functions for different values of the regularization parameter $\delta$.
\begin{proposition}\label{nested}
Consider the situation of Definition~\ref{GSD} and let $\mathcal{D} \subseteq\mathcal{G} \subseteq \mathcal{F}_{(\mathcal{A},S)}$. For $0 \leq\delta_1 \leq \delta_2\leq \delta<1$ it then holds:
\begin{itemize}\itemsep1.5mm
    \item[i)] $\textsf{max}(\mathcal{D},\mathcal{A},\mathcal{M} , \delta_1)\subseteq \textsf{max}(\mathcal{D},\mathcal{A},\mathcal{M} , \delta_2)$
    \item[ii)] $\textsf{und}(\mathcal{D},\mathcal{A},\mathcal{M} , \delta_2)\subseteq \textsf{und}(\mathcal{D},\mathcal{A},\mathcal{M} , \delta_1)$
\end{itemize}
\end{proposition}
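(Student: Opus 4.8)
The plan is to trace how the defining set of representations $\mathcal{N}^{\delta}_{\mathcal{A}}$ shrinks as $\delta$ grows, and then exploit the quantifier structure of the two choice rules. The crucial preliminary observation is the following monotonicity: for $\delta_1 \leq \delta_2$ we have $\mathcal{N}^{\delta_2}_{\mathcal{A}} \subseteq \mathcal{N}^{\delta_1}_{\mathcal{A}}$. This is immediate from Definition~\ref{granularity}, since any $u \in \mathcal{N}_{\mathcal{A}}$ satisfying the two gap inequalities with bound $\delta_2$ also satisfies them with the smaller bound $\delta_1$. Hence the \emph{larger} regularization parameter corresponds to the \emph{smaller} set of admissible utility representations.

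From here I would unwind the definition of the dominance relation. By Definition~\ref{GSD}, $(X,Y) \in \geq_{(\mathcal{A},\mathcal{M},\delta)}$ means $\mathbb{E}_\pi(u\circ X)\ge \mathbb{E}_\pi(u\circ Y)$ for all $u \in \mathcal{N}^{\delta}_{\mathcal{A}}$ and all $\pi \in \mathcal{M}$. Since the universal quantifier over $u$ ranges over a smaller set when $\delta$ is larger, a dominance that holds for the $\delta_1$-set automatically holds for the $\delta_2$-set; that is,
\begin{equation*}
\geq_{(\mathcal{A},\mathcal{M},\delta_1)} \,\subseteq\, \geq_{(\mathcal{A},\mathcal{M},\delta_2)}.
\end{equation*}
For part i), fix $X \in \textsf{max}(\mathcal{D},\mathcal{A},\mathcal{M},\delta_1)$. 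Then $(X,Y)\in \geq_{(\mathcal{A},\mathcal{M},\delta_1)}$ for every $Y \in \mathcal{D}$, and by the relation inclusion just established the same pairs lie in $\geq_{(\mathcal{A},\mathcal{M},\delta_2)}$, so $X \in \textsf{max}(\mathcal{D},\mathcal{A},\mathcal{M},\delta_2)$. This gives the claimed inclusion for the maximal sets.

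For part ii) the direction reverses, which is exactly what the statement predicts. The subtle point -- and the step I expect to require the most care -- is that the undominated set is defined through the \emph{strict} part $P_{\geq_{(\mathcal{A},\mathcal{M},\delta)}}$ rather than the relation itself, so I must check that the inclusion of the weak relations transfers correctly to their strict parts. Writing $\geq_1$ and $\geq_2$ for the two relations, from $\geq_1 \subseteq \geq_2$ one obtains for the strict parts that $(Y,X)\in P_{\geq_1}$, i.e.\ $(Y,X)\in \geq_1$ but $(X,Y)\notin \geq_1$, implies $(Y,X)\in \geq_2$; the remaining task is to confirm $(X,Y)\notin \geq_2$, which does \emph{not} follow automatically and is where one must argue carefully. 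The cleaner route is contrapositive: take $X \in \textsf{und}(\mathcal{D},\mathcal{A},\mathcal{M},\delta_2)$ and suppose, for contradiction, that $X \notin \textsf{und}(\mathcal{D},\mathcal{A},\mathcal{M},\delta_1)$, so some $Y$ strictly $\delta_1$-dominates $X$. I would then show this strict $\delta_1$-dominance forces strict $\delta_2$-dominance, contradicting membership in the $\delta_2$-undominated set. Establishing that the asymmetry ($(X,Y)\notin \geq_1 \Rightarrow (X,Y)\notin \geq_2$) survives the passage to the smaller representation set is the genuine content here; it should follow because failure of $\delta_1$-dominance is witnessed by some pair $(u,\pi)$ with $u\in\mathcal{N}^{\delta_1}_{\mathcal{A}}$, and one needs such a witness inside the smaller set $\mathcal{N}^{\delta_2}_{\mathcal{A}}$, which is precisely the point demanding a careful continuity or boundary argument rather than a one-line set inclusion.
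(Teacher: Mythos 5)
Your preliminary observation $\mathcal{N}^{\delta_2}_{\mathcal{A}} \subseteq \mathcal{N}^{\delta_1}_{\mathcal{A}}$ and your proof of part i) coincide exactly with the paper's argument. For part ii), however, your proposal stops precisely where the work begins: you correctly observe that the strict part does not transfer by the set inclusion alone --- if $(Y,X)\in P_{\geq_{(\mathcal{A},\mathcal{M},\delta_1)}}$, the witness $(u_0,\pi_0)$ certifying $(X,Y)\notin\,\geq_{(\mathcal{A},\mathcal{M},\delta_1)}$ lives in $\mathcal{N}^{\delta_1}_{\mathcal{A}}\times\mathcal{M}$ and need not lie in $\mathcal{N}^{\delta_2}_{\mathcal{A}}\times\mathcal{M}$ --- but you then only announce that ``a careful continuity or boundary argument'' should close this, without supplying it. As written, part ii) is therefore not proved. (For what it is worth, the paper's own proof passes over exactly this point with the word ``clearly'', so you have put your finger on a step that is genuinely under-argued there as well; the monotonicity of $\delta \mapsto \mathcal{N}^{\delta}_{\mathcal{A}}$ and the quantifier bookkeeping are all the paper makes explicit.)

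The missing step can be closed as follows. Assume $\delta_1<\delta_2$ (otherwise there is nothing to prove) and suppose $(Y,X)\in P_{\geq_1}$ but $(Y,X)\notin P_{\geq_2}$, writing $\geq_1,\geq_2$ as you do. Since $(Y,X)\in\,\geq_2$ by the inclusion of relations, failure of strictness forces $(X,Y)\in\,\geq_2$, so $g_{\pi}(u):=\mathbb{E}_{\pi}(u\circ Y)-\mathbb{E}_{\pi}(u\circ X)$ vanishes for every $(u,\pi)\in\mathcal{N}^{\delta_2}_{\mathcal{A}}\times\mathcal{M}$, while $g_{\pi}\geq 0$ on $\mathcal{N}^{\delta_1}_{\mathcal{A}}$ for every $\pi\in\mathcal{M}$ and $g_{\pi_0}(u_0)>0$ for the witness. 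Now every defining inequality of $\mathcal{N}^{\delta_1}_{\mathcal{A}}$ has the form $c(u)\geq\delta_1$ with $c$ affine in $u$, and any $u_2\in\mathcal{N}^{\delta_2}_{\mathcal{A}}$ (nonempty by $\delta$-consistency) satisfies $c(u_2)\geq\delta_2>\delta_1$ with strict slack; hence for $t>0$ small enough the outward extension $u_t:=(1+t)u_2-tu_0$ still lies in $\mathcal{N}^{\delta_1}_{\mathcal{A}}$, and $u_2=\tfrac{1}{1+t}u_t+\tfrac{t}{1+t}u_0$. Since $g_{\pi_0}$ is affine in $u$, $0=g_{\pi_0}(u_2)=\tfrac{1}{1+t}g_{\pi_0}(u_t)+\tfrac{t}{1+t}g_{\pi_0}(u_0)$ with both summands nonnegative, forcing $g_{\pi_0}(u_0)=0$, a contradiction. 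In short: $\mathcal{N}^{\delta_2}_{\mathcal{A}}$ sits in the relative interior of $\mathcal{N}^{\delta_1}_{\mathcal{A}}$, and an affine function that is nonnegative on a convex set and vanishes at a relative interior point vanishes on the whole set. That is the argument your proposal needs to actually write down.
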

\begin{proof}
Both parts of the Proposition straightforwardly follow by observing that the condition $0 \leq\delta_1 \leq \delta_2\leq \delta<1$ together with $\delta$-consistency implies the property $\mathcal{N}^{\delta_2}_{\mathcal{A}}\subseteq\mathcal{N}^{\delta_1}_{\mathcal{A}}$ by definition. Specifically, in case i) this property implies that if $X$ dominates all $Y\in \mathcal{D}$ in expectation w.r.t.~all pairs $ (u,\pi) \in \mathcal{N}^{\delta_1}_{\mathcal{A}}\times\mathcal{M}$, then the same holds true for all pairs $ (u,\pi) \in \mathcal{N}^{\delta_2}_{\mathcal{A}}\times\mathcal{M}$. Contrarily, in case ii) the property implies that if there is no $Y \in \mathcal{D}$ which strictly expectation-dominates $X$ for some pair $ (u_0,\pi_0) \in \mathcal{N}^{\delta_2}_{\mathcal{A}}\times\mathcal{M}$ and weakly expectation-dominates $X$ for all pairs $ (u,\pi) \in \mathcal{N}^{\delta_2}_{\mathcal{A}}\times\mathcal{M}$, then it clearly does not exist such $Y$ if expectation-domination must be satisfied over the larger set of pairs $\mathcal{N}^{\delta_1}_{\mathcal{A}}\times\mathcal{M}$.\hfill $\square$
\end{proof}
Proposition~\ref{nested} nicely illustrates the role of $\delta$ in the choice functions from Definition~\ref{cf}: by coarsening the granularity at which the utility of consequences is measured, i.e., by increasing $\delta$, clearer choices can be made. Specifically, in the case of maximal actions, the choice sets increase with increasing $\delta$, which means that maximal actions can be found at all (the smaller $\delta$, the more likely the choice sets are empty). In the case of undominated actions, the choice sets decrease with increasing $\delta$ and fewer actions cannot be rejected given the information.
\subsection{Computation}\label{computation}
If the sets $A$ and $S$ are finite, we now show -- generalizing existing results as a preparation for the multi-target setting -- that checking two actions for $\geq_{(\mathcal{A},\mathcal{M} , \delta)}$-dominance can be done by solving a series of linear programs. By repeatedly applying this procedure, the choice sets of the choice functions from Definition~\ref{cf} can also be computed. An important part is that the property of being a representation of a preference system can be expressed by a set of linear inequalities.
\begin{definition}
Let $\mathcal{A}=[A, R_1 , R_2]$ be $\delta$-consistent,
 where $A=\{a_1, \dots , a_n\}$, $S=\{s_1, \dots , s_m\}$, and
    $a_{k_1}, a_{k_2} \in A$ are such that $(a_{k_1},a) \in R_1$ and $(a,a_{k_2}) \in R_1$ for all $a \in A$.
Denote by $\nabla_{\mathcal{A}}^{\delta}$ the set of all $(v_1, \dots , v_n) \in [0,1]^n$ satisfying  the following (in)equalities:
\begin{itemize}
\item[$\cdot$] $v_{k_1}=1$ and $v_{k_2}=0$,
\item[$\cdot$] $v_i=v_j $ for every pair $(a_i,a_j) \in I_{R_1}$,
    \item[$\cdot$] $v_i-v_j \geq \delta$ for every pair $(a_i,a_j) \in P_{R_1}$,
    \item[$\cdot$] $v_k-v_l= v_p -v_q $ for every pair of pairs $((a_k,a_l),(a_p,a_q)) \in I_{R_2}$
and 
\item[$\cdot$] $v_k-v_l-v_p+v_q \geq \delta$ for every pair of pairs $((a_k,a_l),(a_p,a_q)) \in P_{R_2}$.
\end{itemize}
\end{definition}
Equipped with this, we have the following Theorem regarding the computation.
\begin{theorem} \label{tlp}
Consider the same situation as described above. For $X_i,X_j \in \mathcal{G}$ and $t \in \{1 , \dots , K\}$, define the linear program
\begin{equation}\label{glp}
  \sum_{\ell=1}^{n} v_{\ell} \cdot [\pi^{(t)}(X_i^{-1}(\{a_{\ell}\}))-\pi^{(t)}(X_j^{-1}(\{a_{\ell}\}))] \longrightarrow \min_{(v_1 , \dots , v_n)\in \mathbb{R}^{n}}
\end{equation}
  with constraints  $(v_1 , \dots , v_n)\in\nabla^{\delta}_{\mathcal{A}}$. 
  Denote by $opt_{ij}(t)$ the optimal value of this programming problem. It then holds:
  $$X_i\geq_{(\mathcal{A},\mathcal{M} , \delta)} X_j ~~\Leftrightarrow ~~ \min\{opt_{ij}(t):t=1, \dots, K\} \geq 0.$$
\end{theorem}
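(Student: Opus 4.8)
The plan is to unwind the two universal quantifiers hidden in Definition~\ref{GSD} — over representations $u \in \mathcal{N}^{\delta}_{\mathcal{A}}$ and over probabilities $\pi \in \mathcal{M}$ — and to show that each collapses to a finite linear condition, so that $\geq_{(\mathcal{A},\mathcal{M},\delta)}$-dominance becomes nonnegativity of every $opt_{ij}(t)$.

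First I would establish the dictionary between normalized $\delta$-representations and feasible points of the program, namely that the coordinate map $u \mapsto (u(a_1),\dots,u(a_n))$ is a bijection from $\mathcal{N}^{\delta}_{\mathcal{A}}$ onto $\nabla^{\delta}_{\mathcal{A}}$. Since $A$ is finite, a representation \emph{is} its value vector, so this reduces to matching the constraints defining $\nabla^{\delta}_{\mathcal{A}}$ one-to-one against Definitions~\ref{consistency} and~\ref{granularity}: the equations $v_{k_1}=1,\ v_{k_2}=0$ encode the normalization $u(a^*)=1,\ u(a_*)=0$; the equalities on $I_{R_1}$ and $I_{R_2}$ encode the indifference clauses; and the $\delta$-gap inequalities on $P_{R_1}$ and $P_{R_2}$ encode simultaneously the strict-preference clauses of Definition~\ref{consistency} and the $\delta$-consistency conditions of Definition~\ref{granularity}. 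Because $R_1 = I_{R_1}\cup P_{R_1}$ and $R_2 = I_{R_2}\cup P_{R_2}$, every defining inequality of a representation is captured, and since $\delta>0$ forces the strict parts to receive a strictly positive gap, the two-sided ``equality iff indifference'' requirement is automatically respected.

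Second, I would rewrite the expected-utility difference as the linear objective of~(\ref{glp}). For finite $S$ and any $\pi$ one has $\mathbb{E}_{\pi}(u\circ X_i)=\sum_{\ell=1}^{n} u(a_\ell)\,\pi(X_i^{-1}(\{a_\ell\}))$, so with $v_\ell = u(a_\ell)$ the quantity $\mathbb{E}_{\pi}(u\circ X_i)-\mathbb{E}_{\pi}(u\circ X_j)$ is \emph{exactly} the objective of~(\ref{glp}) evaluated at $\pi$. Combined with the bijection of the first step, $X_i \geq_{(\mathcal{A},\mathcal{M},\delta)} X_j$ is thus equivalent to this objective being nonnegative for every $v\in\nabla^{\delta}_{\mathcal{A}}$ and every $\pi\in\mathcal{M}$. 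To finish, I reduce each quantifier in turn. For fixed $v$ the objective is affine in $\pi$ through the masses $\pi(X_\cdot^{-1}(\{a_\ell\}))$; since $\mathcal{M}$ is the convex hull of its finitely many extreme points $\pi^{(1)},\dots,\pi^{(K)}$ recorded after Definition~\ref{credset}, nonnegativity over all $\pi\in\mathcal{M}$ is equivalent to nonnegativity at each $\pi^{(t)}$, as any $\pi=\sum_t \lambda_t \pi^{(t)}$ turns the objective into a convex combination of its values at the extreme points. For each fixed $t$, ``objective $\geq 0$ for all $v\in\nabla^{\delta}_{\mathcal{A}}$'' is by definition the statement $opt_{ij}(t)\geq 0$, and conjoining over $t$ yields $\min\{opt_{ij}(t):t=1,\dots,K\}\geq 0$.

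The main obstacle I anticipate is the first step: one must check with care that $\nabla^{\delta}_{\mathcal{A}}$ neither over- nor under-constrains $\mathcal{N}^{\delta}_{\mathcal{A}}$ — in particular that the bare inequalities of $\nabla^{\delta}_{\mathcal{A}}$ faithfully reproduce the two-sided ``equality iff indifference'' clauses, which is exactly where the positivity of $\delta$ does the work. The extreme-point reduction, though it is the key technical lever that makes the problem finitely checkable, is then a routine consequence of the linearity of the objective in $\pi$ together with the finite generation of $\mathcal{M}$.
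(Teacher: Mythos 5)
Your overall strategy coincides with the paper's: reduce the credal set to its finitely many extreme points (the paper does this by noting that the concave map $\pi \mapsto \inf_u \mathbb{E}_\pi(u\circ X_i)-\mathbb{E}_\pi(u\circ X_j)$ attains its minimum on $\mathcal{E}(\mathcal{M})$, you do it by fixing $v$ and using affinity of the objective in $\pi$ -- these are interchangeable), and then identify the singleton case with the linear program. The paper outsources that singleton case to Proposition 3 of the cited reference, whereas you spell it out via the correspondence $u \mapsto (u(a_1),\dots,u(a_n))$; that part of your write-up is therefore the place where the real checking has to happen, and it contains one genuine gap.

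Your claimed bijection between $\mathcal{N}^{\delta}_{\mathcal{A}}$ and $\nabla^{\delta}_{\mathcal{A}}$ is false for $\delta=0$, and the theorem (like Definition~\ref{granularity}, Proposition~\ref{nested}, and the paper's own example, which uses $\delta_{\min}=0$) must cover that case. You yourself locate the load-bearing step at ``$\delta>0$ forces the strict parts to receive a strictly positive gap,'' but for $\delta=0$ the constraint $v_i-v_j\geq 0$ on $P_{R_1}$ (and likewise on $P_{R_2}$) admits feasible points with $v_i=v_j$, which violate the ``equality iff indifference'' clause of Definition~\ref{consistency} and hence do not correspond to any $u\in\mathcal{N}^{0}_{\mathcal{A}}$. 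So $\nabla^{0}_{\mathcal{A}}$ strictly over-approximates the coordinate image of $\mathcal{N}^{0}_{\mathcal{A}}$ in general, and the equivalence ``dominance $\Leftrightarrow$ objective nonnegative on all of $\nabla^{0}_{\mathcal{A}}$'' does not follow from a bijection. The fix is a short density argument: since $0$-consistency gives some $u_0\in\mathcal{N}^{0}_{\mathcal{A}}$, for any $v\in\nabla^{0}_{\mathcal{A}}$ the points $(1-\varepsilon)v+\varepsilon u_0$ lie in the image of $\mathcal{N}^{0}_{\mathcal{A}}$ for all $\varepsilon\in(0,1]$, so $\nabla^{0}_{\mathcal{A}}$ is contained in the closure of that image; by continuity of the linear objective the minimum over $\nabla^{0}_{\mathcal{A}}$ equals the infimum over $\mathcal{N}^{0}_{\mathcal{A}}$, and nonnegativity of one is equivalent to nonnegativity of the other. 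With that patch, the rest of your argument (the rewriting of $\mathbb{E}_\pi(u\circ X_i)$ as $\sum_\ell u(a_\ell)\pi(X_i^{-1}(\{a_\ell\}))$ and the extreme-point reduction) is correct and matches the paper.
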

%
%
\begin{proof}
The proof for the case $\mathcal{M}=\{\pi\}$ is a straightforward generalization of the one of Proposition 3 in~\cite{jnsa2022}. For the case of a general convex and finitely generated credal set $\mathcal{M}$ with extreme points $\mathcal{E}(\mathcal{M})=\{\pi^{(1)}, \dots ,\pi^{(K)}\}$, we first observe that the following holds:
\begin{equation}\label{help}
X_i\geq_{(\mathcal{A},\mathcal{M} , \delta)} X_j ~~\Leftrightarrow ~~ \forall t= 1 ,\dots , K:~X_i\geq_{(\mathcal{A},\{\pi^{(t)}\} , \delta)} X_j
\end{equation}
Here, the direction $\Rightarrow$ follows by definition and the direction $\Leftarrow$ is an immediate consequence of the fact that the \textit{concave} function $\pi \mapsto \inf_{u} \mathbb{E}_{\pi}(u \circ X_i)-\mathbb{E}_{\pi}(u \circ X_j)$ must attain its minimum on $\mathcal{E}(\mathcal{M})$. Since we already observed that the Theorem is true for the case $\mathcal{M}=\{\pi\}$ for arbitrary $\pi \in \mathcal{M}$, the right hand side of~(\ref{help}) is equivalent to saying $\forall t= 1 , \dots , T:~ opt_{ij}(t) \geq 0$, which itself is equivalent to saying $\min\{opt_{ij}(t):t=1, \dots, K\} \geq 0$. This completes the proof. \hfill $\square$
\end{proof}
\section{Adaptation to Multi-Target Decision Making} \label{adaptation}
We now show how the framework for decision making under weakly structured information can be applied to the situation of multi-target decision making.

\subsection{Multi-Target Decision Making} \label{mtdp}
We now turn to the following situation: We again consider a decision problem $\mathcal{G}$ with actions $X:S \to A$ mapping from a state space $S$ to a consequence space $A$. More specifically than before, however, we now assume that the agent can evaluate the different consequences $a \in A$ with different scores that reflect their compatibility with different targets. The goal is then to determine actions that provide the most balanced good performance under all targets simultaneously.
\begin{definition}
Let $\mathcal{G}$ be a decision problem with  $A$ as its consequence set. A \textbf{target evaluation} is a function
 $\phi: A \to [0,1]$. \end{definition}
For a given target evaluation $\phi$, the number $\phi(a)$ is interpreted as a measure for $a$'s performance under the underlying target (higher is better). In what follows, we allow targets to be either of \textit{cardinal} or only of \textit{ordinal} scale: While for a cardinal target evaluation we may also compare \textit{extents of target improvement},\footnote{In the sense that $\phi(a)- \phi(b) \geq \phi (c) -\phi(d)$ allows us to conclude that the improvement from exchanging $b$ by $a$ is at least as high than the one from exchanging $d$ by $c$.}~an ordinal target evaluation forbids such comparisons and is restricted to only comparing the ranks of consequences induced by it. We now give the definition of a multi-target decision problem and introduce two important associated sets.
\begin{definition}
Let $\phi_1, \dots, \phi_r$ be distinct target evaluations for $\mathcal{G}$. Then:
\begin{itemize}\itemsep1.5mm
    \item[i)] $\mathbb{M}=(\mathcal{G},(\phi_j)_{j=1 , \dots , r})$ is called \textbf{multi-target decision problem (MTDP)}.
    \item[ii)] $X \in \mathcal{G}$ is \textbf{uniformly optimal} if $\phi_j(X(s))\geq \phi_j(Y(s))$ for every $Y \in \mathcal{G}$, $s \in S$ and $j \in \{1, \dots, r\}$. Denote the set of all such $X$ by \textsf{uno}$(\mathbb{M})$.
    \item[iii)] $X \in \mathcal{G}$ is \textbf{undominanted} if there is no $Y \in \mathcal{G}$ such that for all $s \in S$ and $j\in \{1 , \dots , r\}$ it holds $\phi_{j}(Y(s))\geq \phi_{j}(X(s))$ and for some $j_0 \in \{1 , \dots , r\}$ and $s_0 \in S$ it holds $\phi_{j_0}(Y(s_0))> \phi_{j_0}(X(s_0))$. The set of all such $X$ is denoted by \textsf{par}$(\mathbb{M})$ and called the \textbf{Pareto front} of the MTDP.
\end{itemize}
\end{definition}
\subsection{Transferring the Concepts} \label{transfer}
In a MTDP as just described, the goodness of actions must be evaluated in a multidimensional space, namely the $ [0,1]^r$, with the additional restriction that not all targets may be interpreted on a cardinal scale of measurement. Instead of restricting oneself here exclusively to the consideration of the Pareto front of the component-wise order, one can use the available information more efficiently by defining beforehand a suitable preference system on $ [0,1]^r$, which can also include the information in the cardinal dimensions. 

To do so, we assume --  w.l.o.g.~-- that the first $0 \leq z \leq r$ target evaluations $\phi_1, \dots, \phi_z$ are of cardinal scale, while the remaining ones are purely ordinal. Concretely, we then consider subsystems of the consistent  preference system\footnote{A representation is given by $u: [0,1]^r \to   [0,1]$ with $u(x)= \frac{1}{r} \sum_{i=1}^{r}x_i$ for $x \in [0,1]^r$. Even if $\text{pref}( [0,1]^r)$ is not $\delta$-consistent for $\delta>0$, its subsystems might very well be.}
\begin{equation}\label{cps}
\text{pref}( [0,1]^r)=[ [0,1]^r,R_1^*,R_2^*]   
\end{equation}
where
\begin{equation*}
R_1^*=\Bigl\{(x,y) \in  [0,1]^r\times  [0,1]^r: x_j \geq y_j \text{ for all } j=1, \dots,r \Bigr\},~\text{and}
\end{equation*}
\begin{equation*}
R_2^*= \Biggl\{((x,y),(x',y'))\in R_1^{*} \times R_1^{*}: \begin{array}{lr}
        x_j -y_j \geq x_j'-y_j' \text{ ~for all } j=1, \dots,z~~~\wedge\\
        x_j \geq x_j'\geq y_j' \geq  y_j \text{ for all } j=z+1, \dots,r
    \end{array}\Biggr\}.  
\end{equation*}
While $R_1^*$ can directly be interpreted as a componentwise dominance decision, the construction of the relation $R_2^*$ deserves a few additional words of explanation: One pair of consequences is preferred to another such pair if it is ensured in the ordinal dimensions that the exchange associated with the first pair is not a deterioration to the exchange associated with the second pair and, in addition, there is component-wise dominance of the differences of the cardinal dimensions.

The introduction of a suitable preference system now allows us to transfer the relation $\geq_{(\mathcal{A},\mathcal{M} , \delta)}$ from Definition~\ref{GSD} from general decision problems to multi-target decision problems. Here, if $\mathbb{M}=(\mathcal{G},(\phi_j)_{j=1 , \dots , r})$ is a MTDP, we denote by sub$(\mathbb{M})$ the subsystem of $\text{pref}( [0,1]^r)$ obtained by restricting $R_1^*$ and $R_2^*$ to $$\phi(\mathcal{G}):=\{\phi\circ Z(s):Z \in \mathcal{G} \wedge s \in S\}\cup\{\mathbf{0},\mathbf{1}\}$$ where $\phi\circ Z:=(\phi_1 \circ Z, \dots , \phi_r\circ Z)$ for $Z \in \mathcal{G}$ and $\mathbf{0},\mathbf{1} \in [0,1]^r$ are the vectors containing only $0$ and $1$. Further, we define $\mathcal{G}^*:=\{\phi\circ Z:Z \in \mathcal{G}\}$.
\begin{definition} \label{deldom}
Let $\mathbb{M}=(\mathcal{G},(\phi_j)_{j=1 , \dots , r})$ be a MTDP such that the function $\phi\circ Z\in \mathcal{F}_{(\emph{sub}(\mathbb{M}),S)}$ for all $Z \in \mathcal{G}$ and let $\mathcal{M}$ denote a credal set on $(S,\sigma(S))$. For $X,Y \in \mathcal{G}$, say that $Y$ is \textbf{$\delta$-dominated} by $X$, if $$\phi \circ X\geq_{(\emph{sub}(\mathbb{M}),\mathcal{M},\delta)}\phi \circ Y.$$
The induced binary relation on $\mathcal{G}$ is denoted by $\succsim_{\delta}$.
\end{definition}
It is immediate that $X \in \mathcal{G}$ is maximal resp. undominated w.r.t.~$\succsim_{\delta}$ if and only if $X \in \textsf{max}(\mathcal{G}^*,\text{sub}(\mathbb{M}),\mathcal{M} , \delta)$ resp. $X \in \textsf{und}(\mathcal{G}^*,\text{sub}(\mathbb{M}),\mathcal{M} , \delta)$. Given this observation, the following Proposition demonstrates that $\succsim_{\delta}$ is (in general) more informative than a simple Pareto-analysis.
\begin{proposition}
Consider the situation of Definition~\ref{deldom}. If $S$ is (at most) countable and  $\pi(\{s\})>0$ for all $\pi \in \mathcal{M}$ and $s \in S$,\footnote{These conditions are only needed for property ii), whereas i) holds in full generality.} the  following properties hold:
\begin{itemize}\itemsep1.5mm
    \item[i)] $\textsf{max}(\mathcal{G}^*,\emph{sub}(\mathbb{M}),\mathcal{M} , \delta)\supseteq\textsf{uno}(\mathbb{M})$
    \item[ii)] $\textsf{und}(\mathcal{G}^*,\emph{sub}(\mathbb{M}),\mathcal{M} , \delta) \subseteq \textsf{par}(\mathbb{M})$
\end{itemize}
Moreover, there is equality in i) and ii) if the restriction of $R_2^*$ is empty, $\delta =0$ and $\mathcal{M}$ is the set of all probability measures.
\end{proposition}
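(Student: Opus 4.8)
The plan is to prove the two inclusions separately, exploiting the definitions of uniform optimality and Pareto-dominance and translating them into statements about the representation $\geq_{(\text{sub}(\mathbb{M}),\mathcal{M},\delta)}$. Throughout I would use the characterization preceding the statement, namely that membership in $\textsf{max}$ or $\textsf{und}$ of $\mathcal{G}^*$ is equivalent to being maximal resp.\ undominated w.r.t.\ $\succsim_\delta$, so it suffices to reason directly about $\succsim_\delta$ and the underlying pointwise scores $\phi_j$.

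For part i), I would take $X \in \textsf{uno}(\mathbb{M})$ and show $X \in \textsf{max}(\mathcal{G}^*,\text{sub}(\mathbb{M}),\mathcal{M},\delta)$, i.e.\ $\phi\circ X \geq_{(\text{sub}(\mathbb{M}),\mathcal{M},\delta)} \phi\circ Y$ for every $Y \in \mathcal{G}$. By definition of uniform optimality, $\phi_j(X(s)) \geq \phi_j(Y(s))$ for all $j$ and all $s$, which means the pair $(\phi\circ X(s), \phi\circ Y(s))$ lies in $R_1^*$ pointwise. The key observation is that any representation $u \in \mathcal{N}^\delta_{\text{sub}(\mathbb{M})}$ is by Definition~\ref{consistency}i) monotone with respect to $R_1^*$, hence $u(\phi\circ X(s)) \geq u(\phi\circ Y(s))$ for every $s$. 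Integrating this pointwise inequality against any $\pi \in \mathcal{M}$ preserves the direction, giving $\mathbb{E}_\pi(u\circ\phi\circ X) \geq \mathbb{E}_\pi(u\circ\phi\circ Y)$ for all $u \in \mathcal{N}^\delta_{\text{sub}(\mathbb{M})}$ and all $\pi \in \mathcal{M}$, which is exactly the required dominance. Since no additional assumptions on $S$ or $\mathcal{M}$ are used, this direction holds in full generality, matching the footnote.

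For part ii), I would argue by contraposition: if $X \notin \textsf{par}(\mathbb{M})$, I want to produce a $Y$ that strictly dominates $X$ in the $\succsim_\delta$ sense, so that $X \notin \textsf{und}$. Failing to be Pareto-undominated yields $Y$ with $\phi_j(Y(s)) \geq \phi_j(X(s))$ everywhere and a strict inequality at some $(j_0,s_0)$. Pointwise monotonicity gives $\mathbb{E}_\pi(u\circ\phi\circ Y) \geq \mathbb{E}_\pi(u\circ\phi\circ X)$ for all admissible $u,\pi$, so $Y \succsim_\delta X$. The delicate point is establishing \emph{strict} dominance, i.e.\ that there is a pair $(u_0,\pi_0)$ with strict inequality in expectation; this is where the countability of $S$ and the full-support condition $\pi(\{s\})>0$ enter. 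At the witnessing state $s_0$ we have $\phi\circ Y(s_0)$ strictly coordinate-dominating $\phi\circ X(s_0)$ in coordinate $j_0$, so any $u_0 \in \mathcal{N}^\delta_{\text{sub}(\mathbb{M})}$ (which strictly separates $R_1^*$-strict pairs by the granularity conditions) gives $u_0(\phi\circ Y(s_0)) > u_0(\phi\circ X(s_0))$; the positive mass $\pi_0(\{s_0\})>0$ then transports this strict pointwise gap into a strict gap in expectation, while all other states contribute nonnegatively.

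The main obstacle I anticipate is precisely the strictness argument in ii): one must ensure that the strict pointwise inequality at a single state is not washed out in expectation, which requires the full-support assumption, and one must also confirm that $\mathcal{N}^\delta_{\text{sub}(\mathbb{M})}$ is nonempty and that its members genuinely separate strict $R_1^*$-pairs, so that a suitable $u_0$ exists. For the final equality claim, I would specialize to the stated regime — empty restriction of $R_2^*$, $\delta=0$, and $\mathcal{M}$ the set of \emph{all} probability measures — and argue the reverse inclusions. With $\mathcal{M}$ containing all Dirac measures $\delta_s$, expectation-dominance for every $\pi$ collapses to pointwise dominance $u(\phi\circ X(s)) \geq u(\phi\circ Y(s))$ for every $s$; and with $R_2^*$ trivial and $\delta=0$ the representation set $\mathcal{N}^0$ is rich enough that $u$-dominance at every state for \emph{all} $u$ forces the coordinatewise inequalities $\phi_j(X(s)) \geq \phi_j(Y(s))$, collapsing $\succsim_0$ back to the Pareto order and yielding equality in both i) and ii).
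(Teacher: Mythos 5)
Your proposal is correct and follows essentially the same route as the paper's proof: part i) via pointwise membership in $R_1^*$, isotonicity of each $u$, and integration; part ii) via the strict gap at the witnessing state $s_0$ being preserved in expectation because $\pi(\{s_0\})>0$. One small note: the strict separation $u_0(\phi\circ Y(s_0))>u_0(\phi\circ X(s_0))$ for a strict $R_1^*$-pair follows from the representation property in Definition~\ref{consistency}~i) (equality only on $I_{R_1}$) rather than from the $\delta$-granularity conditions, which for $\delta=0$ would only yield a weak inequality.
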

\begin{proof}
i) If $X \in \textsf{uno}(\mathbb{M})$, then for all $Y \in \mathcal{G}$ and $s \in S$ we have component-wise dominance of $\phi \circ X(s) $ over $\phi \circ Y(s) $. Thus, $(\phi \circ X(s),\phi \circ Y(s))$ is in the restriction of $ R_1^*$ for all $s\in S$. If we choose $u \in \mathcal{N}^{\delta}_{\text{sub}(\mathbb{M})}$ and $\pi \in \mathcal{M}$ arbitrarily, this implies $\mathbb{E}_{\pi}(u \circ \phi \circ X)\geq\mathbb{E}_{\pi}(u \circ \phi \circ Y)$, since $u$
is isotone w.r.t.~the restriction of $ R_1^*$ and the expectation operator respects isotone transformations.
%
%

ii) Let $X \in \textsf{und}(\mathcal{G},\text{sub}(\mathbb{M}),\mathcal{M} , \delta)$ and assume $X \notin  \textsf{par}(\mathbb{M})$. Then, there is $Y \in \mathcal{G}$ s.t.~for all $s \in S$ and $j\in \{1 , \dots , r\}$ it holds $\phi_{j}(Y(s))\geq \phi_{j}(X(s))$ and for some for some $j_0 \in \{1 , \dots , r\}$ and $s_0 \in S$ it holds $\phi_{j_0}(Y(s_0))> \phi_{j_0}(X(s_0))$. This implies $(\phi \circ Y(s),\phi \circ X(s))$ is in the restriction of $ R_1^*$ for all $s \in S$ and $(\phi \circ X(s_0),\phi \circ Y(s_0))$ is in the restriction of $ P_{R_1^*}$. Choose $\pi \in \mathcal{M}$ and $u \in \mathcal{N}^{\delta}_{\text{sub}(\mathbb{M})}$ arbitrary and define $f:=u \circ \phi \circ Y - u \circ \phi \circ X$. Then, we can compute
\begin{eqnarray*}
\mathbb{E}_{\pi_0}(u_0 \circ \phi \circ Y)-\mathbb{E}_{\pi_0}(u_0 \circ \phi \circ X)
&=& f(s_0) \cdot \pi(\{s_0\}) + \sum_{s \in S\setminus\{s_0\}} f(s) \cdot \pi(\{s\})\\
&\geq& f(s_0) \cdot \pi(\{s_0\}) >0
\end{eqnarray*}
Here, $\geq$ follows since $f \geq 0$ and $>$ follows since $f(s_0) > 0$ and $\pi(\{s_0\})>0$.  This is a contradiction to $X \in \textsf{und}(\mathcal{G},\text{sub}(\mathbb{M}),\mathcal{M} , \delta)$. \hfill  $\square$
\end{proof}
We conclude the section with a immediate consequence of Theorem~\ref{tlp}, which allows to check for $\delta$-dominance in finite MTDPs.
\begin{corollary}
Consider the situation of Definition~\ref{deldom}. If $\phi(\mathcal{G})$ is finite, checking if $(X_i,X_j) \in \succsim_{\delta}$ can be done by the linear program (\ref{glp}) from Theorem~\ref{tlp} with $X_i$, $X_j$ replaced by $\phi \circ X_i$, $\phi \circ X_j$, $A$ replaced by $\phi(\mathcal{G})$, and $\mathcal{A}$ replaced by $\emph{sub}(\mathbb{M})$.
\label{cor}
\end{corollary}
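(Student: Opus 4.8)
The plan is to reduce the statement to a direct application of Theorem~\ref{tlp}, since the substantive computational content has already been established there; the corollary is essentially a matter of matching the transformed problem to the abstract template. First I would invoke Definition~\ref{deldom}, which states that $(X_i,X_j)\in\succsim_{\delta}$ holds precisely when $\phi\circ X_i\geq_{(\text{sub}(\mathbb{M}),\mathcal{M},\delta)}\phi\circ X_j$. Thus checking $\delta$-dominance in the MTDP is literally the problem of checking the relation $\geq_{(\mathcal{A},\mathcal{M},\delta)}$ of Definition~\ref{GSD}, applied to the transformed acts $\phi\circ X_i,\phi\circ X_j$ with the abstract preference system $\mathcal{A}$ instantiated as $\text{sub}(\mathbb{M})$ and the abstract consequence set instantiated as $\phi(\mathcal{G})$.

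The second step is to verify that this transformed problem meets every hypothesis under which Theorem~\ref{tlp} was proved, so the theorem applies verbatim with the announced substitutions. Concretely I would check: the consequence set $\phi(\mathcal{G})$ is finite by assumption; each transformed act $\phi\circ X_i$ maps $S$ into $\phi(\mathcal{G})$, hence is an element of $\phi(\mathcal{G})^S$, and lies in the relevant measurability class by the standing assumption in Definition~\ref{deldom}; the system $\text{sub}(\mathbb{M})$ is $\delta$-consistent, which is already presupposed for the relation $\geq_{(\text{sub}(\mathbb{M}),\mathcal{M},\delta)}$ in Definition~\ref{deldom} to be well-posed; and the order-bounding consequences $a_{k_1},a_{k_2}$ required by Theorem~\ref{tlp} exist because $\mathbf{1}$ and $\mathbf{0}$ were deliberately adjoined to $\phi(\mathcal{G})$ and are, respectively, the componentwise maximum and minimum, hence top and bottom with respect to the restriction of $R_1^*$.

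Once these hypotheses are confirmed, the conclusion is obtained by quoting Theorem~\ref{tlp} directly: the linear program (\ref{glp}), with $X_i,X_j$ replaced by $\phi\circ X_i,\phi\circ X_j$, $A$ by $\phi(\mathcal{G})$, and $\mathcal{A}$ by $\text{sub}(\mathbb{M})$, has optimal values $opt_{ij}(t)$, and the equivalence $(X_i,X_j)\in\succsim_{\delta}\Leftrightarrow\min_{t}opt_{ij}(t)\geq 0$ then follows from the theorem's stated characterization.

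The main obstacle, insofar as a corollary has one, is bookkeeping rather than genuine mathematics: I must confirm that every quantity in (\ref{glp}) remains well-defined after substitution. In particular, the preimages $X_i^{-1}(\{a_{\ell}\})$ become $(\phi\circ X_i)^{-1}(\{p\})$ for the finitely many $p\in\phi(\mathcal{G})$, and since these form a finite measurable partition of $S$ the probabilities $\pi^{(t)}((\phi\circ X_i)^{-1}(\{p\}))$ are meaningful and the objective transcribes correctly. I would additionally flag that the extreme-point enumeration $\mathcal{E}(\mathcal{M})=\{\pi^{(1)},\dots,\pi^{(K)}\}$ indexing $t$ presupposes a finite $S$, consistent with the finite-MTDP setting; this is the one hypothesis I would state explicitly rather than leave implicit in the phrase ``$\phi(\mathcal{G})$ is finite''.
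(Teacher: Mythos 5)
Your proposal is correct and matches the paper's intent exactly: the paper offers no separate argument, simply declaring the corollary ``an immediate consequence of Theorem~\ref{tlp},'' and your unpacking (translate $\succsim_{\delta}$ via Definition~\ref{deldom}, verify the hypotheses of Theorem~\ref{tlp} for the instantiation with $\phi(\mathcal{G})$ and $\mathrm{sub}(\mathbb{M})$, including the adjoined top and bottom elements $\mathbf{1}$ and $\mathbf{0}$, then quote the theorem) is precisely the intended reduction. Your remark that finiteness of $S$ should be made explicit alongside finiteness of $\phi(\mathcal{G})$ is a fair bookkeeping point but does not alter the argument.
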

%
%
%
%
%
\section{Example: Comparison of Algorithms} \label{application}
To illustrate the framework just discussed, we will now take a (synthetic, but potentially realistic) data example: the comparison of algorithms with respect to several targets simultaneously. We assume that six different algorithms $A_1, \dots , A_6$ are to be compared with respect to three different targets, more precisely:
\begin{itemize}\itemsep1.5mm
    \item[$\phi_1$] \textsf{Running Time:} Cardinal target evaluating a score for the algorithms running time for a specific situation on a $[0,1]$-scale (higher is better).
    \item[$\phi_2$] \textsf{Performance:} Cardinal target measuring the goodness of performance of the algorithm for a specific situation on a $[0,1]$-scale (higher is better).
    \item[$\phi_3$] \textsf{Scenario Specific Explainability:} Ordinal target giving each consequence a label of explainability in $\{0,0.1,\dots, 0.9, 1\}$ between $0 \hat{=} \text{"not"}$ to $1 \hat{=} \text{"perfect"}$.
\end{itemize}
Further, suppose that the algorithms are to be compared under five  different scenarios collected in $S=\{s_1, s_2,s_3,s_4,s_5\}$ that can potentially affect the different targets and for which we assume we can rank them according to their probability of occurring. This can be formalized by the credal set
\begin{equation}
    \mathcal{M}=\Bigl\{\pi: \pi(\{s_1\}) \geq \pi(\{s_2\}) \geq \pi(\{s_3\}) \geq \pi(\{s_4\})\geq \pi(\{s_5\})\Bigl\}
\end{equation}
whose extreme points are given by $\mathcal{E}(\mathcal{M})=\{\pi^{(1)}, \dots ,\pi^{(5)}\}$ induced by the formula  $\pi^{(k)}(\{s_j\})=\frac{1}{k} \cdot \mathds{1}_{\{s_1, \dots , s_k\}}(s_j)$,
where $j,k \in \{1 , \dots , 5\}$ (see, e.g.,~\cite{k1976}). 

In this situation, we assume the target evaluations are given as in Table~\ref{data}, where -- as already described -- the first two targets are cardinal and the third one is purely ordinal (i.e., in construction (\ref{cps}) we have $z=2$ and $r=3$). Applying the (series of) linear program(s) described in Corollary~\ref{cor} to every pair of algorithms $(A_i,A_j)$ separately then allows us to specify the full order $\succsim_{\delta}$ for this specific situation. The Hasse diagrams of the partial order for three different values of $\delta$ are visualized in Figure~\ref{hassegraphs}.\footnote{For choosing the three $\delta$-values, we first computed  the maximal value $\delta_{max}$ for which the considered preference system is still $\delta_{max}$-consistent. We  did this computation by running the linear program from~\cite[Proposition 1]{jsa2018}. Then, we picked the values $\delta_{min}=0$ and $\delta_{med}=0.5 \cdot \delta_{max}$ and $\delta_{max}$.} The choice sets of the choice functions with weak interpretation ($\textsf{par}(\cdot)$ and $\textsf{und}(\cdot)$) are given in Table~\ref{weak}, the ones of the choice functions with strong interpretation ($\textsf{uno}(\cdot)$ and $\textsf{max}(\cdot)$) are given in Table~\ref{strong}.

The results show that, among the weakly interpretable choice functions, $\textsf{par}(\cdot)$ is the least decisive one, not rejecting a single one of the available algorithms. In contrast, the choice function $\textsf{und}(\cdot)$ can already exclude half of the available algorithms for a minimum threshold of $\delta=0$. For increasing threshold values, more and more algorithms can be excluded: While for $\delta_{med}$ $A_1$ and $A_4$ are still possible, for $\delta_{max}$ only $A_1$ is potentially acceptable. Among the choice functions with strong interpretation, only $\textsf{max}(\cdot)$ under the maximum threshold $\delta_{max}$ produces a non-empty choice set: Here $A_1$ is uniquely chosen.

As a conclusion we can say that it can be worthwhile -- at least in our synthetic application -- to include available partial knowledge about probabilities and preferences in the decision process: We obtain more informative choice and rejection sets because we can perfectly exploit the available information and do not have to ignore -- as under a pure Pareto analysis, for example -- available partial knowledge about probabilities and preferences.
\begin{table}[!h]
\begin{minipage}{.3\textwidth}
 \flushleft
\centering
\begin{tabular}{|r|rrrrr|}
  \hline
$\phi_1$ & $s_1$ & $s_2$ & $s_3$ & $s_4$ & $s_5$ \\ 
  \hline
$A_1$ & 0.81 & 0.86 & 0.68 & 0.72 & 0.56  \\ 
  $A_2$ & 0.64 & 0.82 & 0.62 & 0.93 & 0.68 \\ 
  $A_3$ & 0.60 & 0.66 & 0.62 & 0.73 & 0.58 \\ 
  $A_4$ & 0.75 & 0.66 & 0.97 & 0.83 & 0.64 \\ 
  $A_5$ & 0.33 & 0.30 & 0.53 & 0.38 & 0.44 \\ 
  $A_6$ & 0.00 & 0.21 & 0.56 & 0.12 & 0.72 \\ 
   \hline
\end{tabular}
\end{minipage}
 \hfill
\begin{minipage}{.3\textwidth}
 \flushleft
 \centering
\begin{tabular}{|r|rrrrr|}
  \hline
$\phi_2$ & $s_1$ & $s_2$ & $s_3$ & $s_4$ & $s_5$ \\ 
  \hline
$A_1$ & 0.71 & 0.88 & 0.82 & 0.90 & 0.91 \\ 
  $A_2$ & 0.52 & 0.67 & 0.68 & 0.72 & 0.88 \\ 
  $A_3$ & 0.56 & 0.45 & 0.81 & 0.83 & 0.47 \\ 
  $A_4$ & 0.36 & 0.12 & 0.54 & 0.60 & 0.17 \\ 
  $A_5$ & 0.79 & 0.30 & 0.47 & 0.68 & 0.46 \\ 
  $A_6$ & 0.14 & 0.58 & 0.30 & 0.66 & 0.29 \\ 
   \hline
\end{tabular}
\end{minipage}
\hfill
\begin{minipage}{.3\textwidth}
 \flushleft
 \centering
\begin{tabular}{|r|rrrrr|}
  \hline
$\phi_3$ & $s_1$ & $s_2$ & $s_3$ & $s_4$ & $s_5$ \\ 
  \hline
$A_1$ & 0.70 & 1.00 & 0.80 & 0.60 & 0.90 \\ 
  $A_2$ & 0.50 & 0.80 & 0.60 & 0.50 & 0.80 \\ 
  $A_3$ & 0.50 & 0.40 & 0.60 & 0.40 & 0.70 \\ 
  $A_4$ & 0.70 & 0.40 & 0.70 & 0.70 & 0.30 \\ 
  $A_5$ & 0.60 & 0.20 & 0.20 & 0.30 & 0.30 \\ 
  $A_6$ & 0.10 & 0.20 & 0.40 & 0.30 & 0.20 \\ 
   \hline
\end{tabular}
\end{minipage}
\caption{Synthetic data for the three different targets.}
\label{data}
 \end{table}
\begin{figure}[!h]
\centering
\includegraphics[width=10cm]{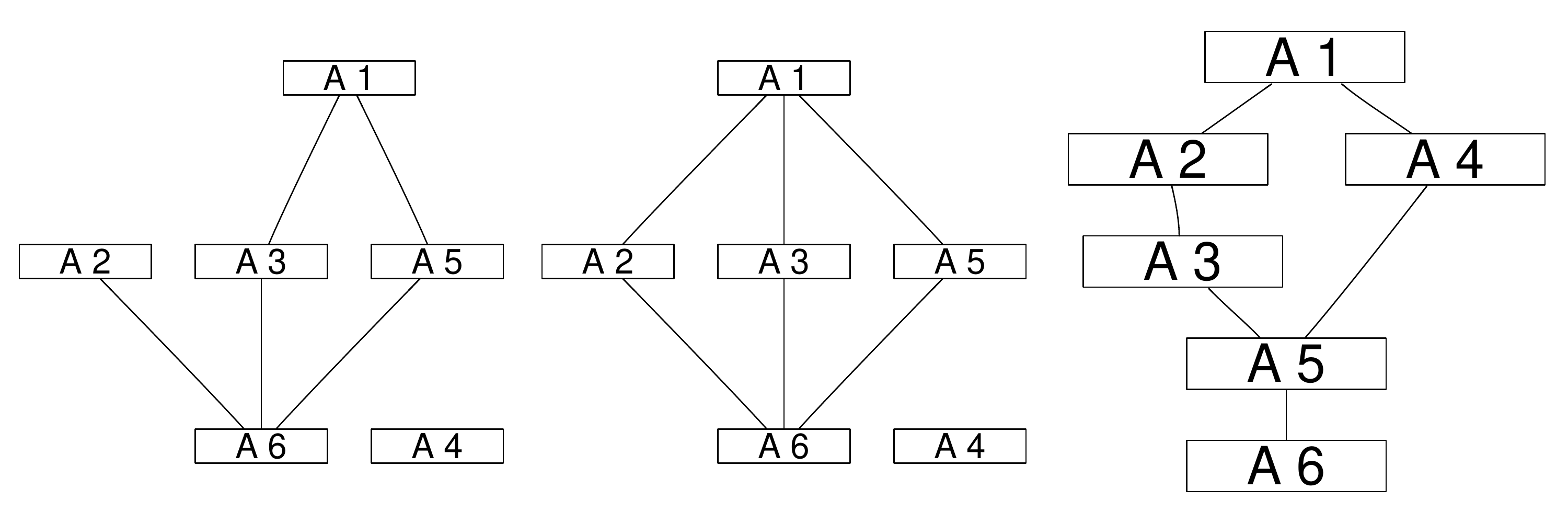}
\caption{Hasse diagrams of $\succsim_{\delta}$ for $\delta=0$ (left), $\delta_{med}$ (middle) and $\delta_{max}$ (right).}
\label{hassegraphs}
\end{figure}
\begin{table}[!h]
\centering
\begin{tabular}{|c|c|c|c|}
  \hline
  $\textsf{par}(\mathbb{M})$& $\textsf{und}(\mathcal{G}^*,\emph{sub}(\mathbb{M}),\mathcal{M} , 0)$&$\textsf{und}(\mathcal{G}^*,\emph{sub}(\mathbb{M}),\mathcal{M} , \delta_{med})$&$\textsf{und}(\mathcal{G}^*,\emph{sub}(\mathbb{M}),\mathcal{M} , \delta_{max})$\\
  \hline
  $\{A_1 , \dots, A_6\}$&$\{A_1, A_2,A_4\}$ &$\{A_1,A_4\}$&$\{A_1\}$\\
  \hline
\end{tabular}
\caption{Choice sets of the choice functions with weak interpretation.}
\label{weak}
\end{table}
\begin{table}[!h]
\centering
\begin{tabular}{|c|c|c|c|}
  \hline
  $\textsf{uno}(\mathbb{M})$& $\textsf{max}(\mathcal{G}^*,\emph{sub}(\mathbb{M}),\mathcal{M} , 0)$&$\textsf{max}(\mathcal{G}^*,\emph{sub}(\mathbb{M}),\mathcal{M} , \delta_{med})$&$\textsf{max}(\mathcal{G}^*,\emph{sub}(\mathbb{M}),\mathcal{M} , \delta_{max})$\\
  \hline
  $\emptyset$&$\emptyset$ &$\emptyset$&$\{A_1\}$\\
  \hline
\end{tabular}
\caption{Choice sets of the choice functions with strong interpretation.}
\label{strong}
\end{table}
\section{Concluding Remarks} \label{crem}
In this paper, we have further developed recent insights from decision theory under weakly structured information and transferred them to multi-target decision making problems. It has been shown that within this formal framework all the available information can be exploited in the best possible way and thus -- compared to a classical Pareto analysis -- a much more informative decision theory can be pursued. Since this initially theoretical finding has also been confirmed in our synthetic data example, a next natural step for further research is applications of our approach to real data situations. Since for larger applications also very large linear programs arise when checking the proposed dominance criterion, it should also be explored to what extent the constraint sets of the linear programs can still be purged of redundancies (e.g., by explicitly exploiting transitivity) or to what extent the optimal values can be approximated by less complex linear programs.
\subsubsection{Acknowledgements} 
Georg Schollmeyer gratefully acknowledges the financial and general support of the LMU Mentoring Program.
%
%
%

\end{document}